\newtheorem{defi}{Definition}
\newtheorem{example}{Example}
\newtheorem{prop}{Proposition}
\newtheorem{theo}{Theorem}
\DeclareMathOperator{\arcsinh}{arcsinh}
\theoremstyle{plain}
\newtheorem{theorem}{Theorem}[section]
\theoremstyle{definition}
\newtheorem{definition}[theorem]{Definition}
\theoremstyle{remark}
\def\eqref#1{equation~\ref{#1}}
\def\1{\bm{1}}
\DeclareMathAlphabet{\mathsfit}{\encodingdefault}{\sfdefault}{m}{sl}
\SetMathAlphabet{\mathsfit}{bold}{\encodingdefault}{\sfdefault}{bx}{n}
\title{RegD: Hierarchical Embeddings via Dissimilarity between Arbitrary Euclidean Regions}
\author{%
Hui Yang$^1$\and
Jiaoyan Chen$^1$ \\
\affiliations
$^1$The University of Manchester\\
\emails
\{hui.yang-2, jiaoyan.chen\}@manchester.ac.uk
}
\begin{document}
\newcommand{\R}{\ensuremath{\mathbb{R}}\xspace}
\newcommand{\Sp}{\ensuremath{\mathbb{S}}\xspace}
\newcommand{\M}{\ensuremath{\mathcal{M}}\xspace}
\newcommand{\C}{\ensuremath{\mathcal{C}}\xspace}
\newcommand{\N}{\ensuremath{\widetilde{\mathcal{N}}}\xspace}
\newcommand{\Nstar}{\ensuremath{\mathcal{N}^*}\xspace}
\newcommand{\Q}{\ensuremath{\mathcal{Q}}\xspace}

\newcommand{\x}{\ensuremath{\mathbf{x}}\xspace}
\newcommand{\bc}{\ensuremath{\mathbf{c}}\xspace}
\newcommand{\bo}{\ensuremath{\mathbf{o}}\xspace}
\newcommand{\bR}{\ensuremath{\mathbf{R}}\xspace}
\newcommand{\bB}{\ensuremath{\mathbf{B}}\xspace}
\newcommand{\bK}{\ensuremath{\mathbf{K}}\xspace}
\newcommand{\ba}{\ensuremath{\mathbf{a}}\xspace}
\newcommand{\bP}{\ensuremath{\mathbf{P}}\xspace}
\newcommand{\Op}{\ensuremath{\mathbf{O}}\xspace}
\newcommand{\y}{\ensuremath{\mathbf{y}}\xspace}
\newcommand{\V}{\ensuremath{\mathbf{V}}\xspace}
\newcommand{\z}{\ensuremath{\mathbf{z}}\xspace}
\newcommand{\A}{\ensuremath{\mathbf{A}}\xspace}
\newcommand{\X}{\ensuremath{\mathbf{X}}\xspace}
\newcommand{\I}{\ensuremath{\mathbf{I}}\xspace}
\newcommand{\D}{\ensuremath{\mathbf{D}}\xspace}
\newcommand{\bL}{\ensuremath{\mathbf{L}}\xspace}
\newcommand{\Lmc}{\ensuremath{\mathcal{L}}\xspace}
\newcommand{\W}{\ensuremath{\mathbf{W}}\xspace}
\newcommand{\bH}{\ensuremath{\mathbf{H}}\xspace}
\newcommand{\Wm}{\ensuremath{\mathbf{U}}\xspace}
\newcommand{\Ori}{\ensuremath{\mathbf{o}}\xspace}
\newcommand{\tran}{\ensuremath{\mathit{\tran}}}
\newcommand{\Bbox}{\ensuremath{\mathit{Box}}}
\newcommand{\Vol}{\ensuremath{\mathit{Vol}}}
\newcommand{\Regions}{\ensuremath{\mathcal{R}}}
\newcommand{\vol}{\mathrm{vol}}
\newcommand{\reg}{\textit{reg}}
\newcommand{\Ball}{\textit{ball}}
\newcommand{\boxes}{\textit{box}}
\newcommand{\para}{\textit{P}}
\newcommand{\dboundary}{\ensuremath{d_{\text{bd}}}}
\newcommand{\ddepth}{\ensuremath{d_{\text{dep}}}}

\newcommand{\rot}{\ensuremath{\textit{Rot}}\xspace}

\newcommand{\tA}{\ensuremath{\Tilde{\mathbf{A}}}\xspace}
\newcommand{\tD}{\ensuremath{\Tilde{\mathbf{D}}}\xspace}
\newcommand{\bu}{\ensuremath{\mathbf{u}}\xspace}
\newcommand{\bv}{\ensuremath{\mathbf{v}}\xspace}
\newcommand{\be}{\ensuremath{\mathbf{e}}\xspace}
\newcommand{\bp}{\ensuremath{\mathbf{p}}\xspace}

\newcommand{\Bmc}{\ensuremath{\mathcal{B}}\xspace}
\newcommand{\Omc}{\ensuremath{\mathcal{O}}\xspace}
\newcommand{\ORI}{\ensuremath{\mathcal{O}_{RI}}\xspace}
\newcommand{\OCI}{\ensuremath{\mathcal{O}_{CI}}\xspace}
\newcommand{\Imc}{\ensuremath{\mathcal{I}}\xspace}
\newcommand{\Jmc}{\ensuremath{\mathcal{J}}\xspace}
\newcommand{\Xmc}{\ensuremath{\mathcal{X}}\xspace}
\newcommand{\Mmc}{\ensuremath{\mathcal{M}}\xspace}
\newcommand{\starM}{$\top\!\bot^\ast$-module}
\newcommand{\ALC}{\ensuremath{\mathcal{ALC}}\xspace}
\newcommand{\ALCH}{\ensuremath{\mathcal{ALCH}}\xspace}
\newcommand{\EL}{\ensuremath{\mathcal{EL}}\xspace}
\newcommand{\ELp}{\ensuremath{\mathcal{EL}^+}\xspace}
\newcommand{\ELH}{\ensuremath{\mathcal{ELH}}\xspace}

\newcommand{\NC}{\ensuremath{\mathsf{N_C}}\xspace}
\newcommand{\NI}{\ensuremath{\mathsf{N_I}}\xspace}
\newcommand{\NR}{\ensuremath{\mathsf{N_R}}\xspace}
\newcommand{\ND}{\ensuremath{\mathsf{N_D}}\xspace}

\newcommand{\bmu}{\boldsymbol{\mu}}
\newcommand{\boxsqel}{Box$^2$EL\xspace}

\maketitle

\begin{abstract}
Hierarchical data is common in many domains like life sciences and e-commerce, and its embeddings often play a critical role. 
While hyperbolic embeddings offer a theoretically grounded approach to representing hierarchies in low-dimensional spaces, current methods often rely on specific geometric constructs as embedding candidates. This reliance limits their generalizability and makes it difficult to integrate with techniques that model semantic relationships beyond pure hierarchies, such as ontology embeddings. 
In this paper, we present RegD, a flexible Euclidean framework that supports the use of arbitrary geometric regions---such as boxes and balls---as embedding representations. Although RegD operates entirely in Euclidean space, we formally prove that it achieves hyperbolic-like expressiveness by incorporating a depth-based dissimilarity between regions, enabling it to emulate key properties of hyperbolic geometry, including exponential growth. 
We establish the faithfulness of our approach. Furthermore, extensive empirical evaluations on diverse real-world datasets demonstrate consistent performance improvements over state-of-the-art methods, highlighting RegD’s potential for broader applications, including ontology embedding tasks that extend beyond hierarchical structures. 
\end{abstract}

\section{Introduction}
Embedding discrete data into low-dimensional vector spaces has become a cornerstone of modern machine learning. In Natural Language Processing (NLP), seminal works such as word2vec \cite{DBLP:journals/corr/abs-1301-3781} and GloVe \cite{DBLP:conf/emnlp/PenningtonSM14} represent words as vectors to capture intricate linguistic relationships. Similarly, knowledge graph embedding methods \cite{DBLP:conf/nips/BordesUGWY13,DBLP:conf/iclr/SunDNT19,DBLP:conf/nips/BalazevicAH19} encode entities and relations as vectors with their semantics concerned to facilitate reasoning and prediction.

Our work focuses on embedding hierarchical data into low-dimensional spaces. Such data represents \emph{partial orders} over sets of elements, denoted as $u \prec v$, where $u$ is a parent of $v$. These partial orders naturally manifest as trees or directed acyclic graphs (DAGs). The ability to effectively embed such hierarchical structures enables crucial operations like inferring sub- or superclasses of concepts and classifying nodes within graphs. These capabilities are essential for various tasks in knowledge management and discovery, particularly towards knowledge bases \cite{DBLP:conf/www/Shi0CW024,DBLP:conf/nips/AbboudCLS20}, ontologies \cite{DBLP:journals/semweb/HeCDHAKS24,DBLP:journals/corr/abs-2406-10964} and taxonomies \cite{DBLP:series/synthesis-dmk/2022Shen}.

Current methods for embedding hierarchical data can be broadly categorized into two paradigms: \textit{region-based} and \textit{hyperbolic} approaches. 
The region-based approach usually represents entities as geometric regions in the Euclidean space, capturing hierarchical relationships through intuitive region-inclusion. However, these methods often experience degraded performance in low-dimensional settings due to the crowding effect inherent in Euclidean spaces.

In contrast, the hyperbolic approach takes advantage of the unique geometric properties of hyperbolic spaces (\textit{e.g.,} their exponential growth in distance and volume) enabling more effective embeddings of tree-structured data in low dimensions. However, as shown in Table \ref{tab:region_hyperbolic}, existing hyperbolic methods often rely on specialized constructed objects as embedding candidates \cite{EntCone,ShadowCone}, limiting their generalizability to data that encode richer semantics beyond hierarchy.  For example, EntailmentCones~\cite{EntCone} and ShadowCones~\cite{ShadowCone} are not closed under intersection and thus cannot handle conjunction.

    

In this paper, we propose a flexible framework named RegD for modeling hierarchical data by embedding arbitrary regions in Euclidean spaces. Our framework relies on two novel dissimilarity metrics, \textit{depth dissimilarity} $\ddepth$ and \textit{boundary dissimilarity} $\dboundary$, which combine the strengths of both region-based approaches in Euclidean spaces and hyperbolic methods.
\textbf{The depth dissimilarity} (cf. Section~\ref{sec:dep_dist}) enables our model to achieve embedding expressiveness comparable to that of hyperbolic spaces by incorporating the ``size'' of the regions under consideration (cf. Theorem~\ref{theo:dep_dist} and Propositions~\ref{prop:dep_eq} and~the general version in Section \ref{app:general_reg}). 
\textbf{The boundary dissimilarity} (cf. Section~\ref{sec:dboundary}) improves the representation of set-inclusion relationships among regions in Euclidean spaces. This allows for better identification of shallower and deeper descendants, thereby capturing hierarchical structures more effectively than traditional region-based approaches (cf. Proposition \ref{prop:hyper}). 
Note that these dissimilarities may be negative or fail to satisfy the triangle inequality, and thus are generally not strict metric distances. 


\begin{table*}
    \centering
    \begin{tabular}{llc}
        \toprule
        \textbf{Method} & \textbf{Embeddings of a node $u$} & \textbf{Energy/score function for $u\prec v$} \\
        \hline
        \textbf{Poincaré}~\cite{DBLP:conf/nips/NickelK17} & Points $\bu\in \mathbb{H}^d$ & $(1 + \alpha(\|\bu\| - \|\bv\|))d_k(\bu, \bv)$  \\
        \hline
        \textbf{EntailmentCones}   
     & 
        Cones with apex $\bu\in \mathbb{H}^d$ and & 
         \multirow{2}{*}{Angle-based function}\\
        ~~\cite{EntCone}&  angle $\theta_\bu = \arcsin(\frac{(1-\|\bu\|^2)}{\|\bu\|})$ & \\
        \hline
        \textbf{ShadowCones} 
        & 
        Cones with apex $\bu\in \mathbb{H}^d$ and  & 
        $d_{\mathbb{H}}(\mathbf{u}, \mathbf{v})$ \textbf{or} \\
        ~~\cite{ShadowCone} &  angle $\theta_\bu = \arctan \sinh (\sqrt{k} r)$ & 
        specific boundary distance\\
        \hline
       \multirow{2}{*}{\textbf{RegD}~(ours)} & arbitrary regions in $\mathbb{R}^d$ (e.g., balls, & 
       \multirow{2}{*}{$\dboundary(\reg_u, \reg_v) + \lambda \cdot \ddepth(\reg_u, \reg_v)$}\\
       & boxes, detailed in Section \ref{app:general_reg}) & \\
        \bottomrule
    \end{tabular}
    \caption{Comparison with hyperbolic methods in $\mathbb{H}^d$ (curvature $-k$, distance $d_k$). $\alpha$, $r$ and $\lambda$ are predefined hyperparameters. Only the umbral half-space model of ShadowCones is shown, with boundary distance defined in Eq.~\ref{eq:cone_dist}.}
    \label{tab:region_hyperbolic}
\end{table*}

Notably, RegD can be applied to arbitrary regions\footnote{Here, “arbitrary regions” does not denote arbitrary subsets of Euclidean space, but rather regular geometric domains of arbitrary shape that can be parameterized and that satisfy the general properties described in Section~\ref{app:general_reg}.}, including common geometric representations such as balls and boxes. This generality enables broad applicability across diverse geometric embeddings for various tasks, extending beyond hierarchy alone data to ontologies that include hierarchies and more complex relationships in Description Logic \cite{DBLP:books/daglib/0041477} (cf. Sections \ref{sec:Ont_inf} and \ref{sec:Ont_pred}). 
Our main contributions are summarized as follows:
\begin{itemize}[leftmargin=*]
\item We present a versatile framework that is able to embed hierarchical data as arbitrary regions in Euclidean space.
\item We offer a rigorous theoretical analysis demonstrating that our framework retains the core embedding benefits of hyperbolic methods.
\item Experiments on diverse real-world datasets demonstrate that our framework consistently outperforms existing approaches on embedding hierarchies and ontologies for reasoning and prediction.
\end{itemize}

Due to space limitations, implementation details, additional experimental results, as well as the code and datasets, are provided at the following anonymous link:
\url{https://github.com/HuiYang1997/RegD}.

\section{Preliminaries and Related Works} 
\subsubsection{Manifold and Hyperbolic Space}  
A $d$-dimensional \emph{manifold} \cite{Lee_2013}, denoted $\mathcal{M}$, is a hypersurface embedded in an $n$-dimensional Euclidean space, $\mathbb{R}^n$, where $n \geq d$, and locally resembles $\mathbb{R}^d$. 

A \emph{Riemannian manifold} $\mathcal{M}$ is a manifold equipped with a Riemannian metric, enabling the definition of a distance function $d_\mathcal{M}(\mathbf{x}, \mathbf{y})$ for $\mathbf{x}, \mathbf{y} \in \mathcal{M}$.  
\textit{Hyperbolic space}, denoted $\mathbb{H}^n$, is a Riemannian manifold with a constant negative curvature of $-k$, where $k > 0$ \cite{lee2006riemannian}. It can be represented using various isometric models, such as   Poincaré half-space model, where the points are defined by the half-space:
\(
U^n = \{ \x \in \mathbb{R}^n : \x_n > 0 \},
\)
and the hyperbolic distance between $\x, \y \in U^n$ is given by
\[
d_k(\x, \y) = \frac{1}{\sqrt{k}} \, \operatorname{arcosh}\left( 1 + \frac{\|\x - \y\|_2^2}{2 \x_n \y_n} \right).
\]

\paragraph{Region-based Methods}
Region-based methods embed the nodes of a directed acyclic graph (DAG) into regions on Euclidean space or Riemannian manifolds, such as boxes \cite{tBox,DBLP:conf/nips/ZhangBMM22},
balls \cite{DBLP:conf/icml/0008TO19}, and cones \cite{OE}, capturing hierarchical relationships through set-inclusion between these regions. Training is typically conducted using an inclusion loss, which is often defined in terms of the distance or volume of the regions \cite{OE}. 
However, such loss functions may involve non-smooth operations, like maximization, which have been addressed through probabilistic \cite{DBLP:conf/acl/McCallumVLM18,DBLP:conf/nips/DasguptaBZV0M20} or smooth \cite{tBox} approximations to improve performance. 
However, most of these methods suffer from the crowding effect in Euclidean space, which limits their expressiveness compared to our method and hyperbolic-based approaches.




\subsubsection{Hyperbolic Methods}  
Hyperbolic space embeddings \cite{DBLP:conf/nips/NickelK17,DBLP:conf/nips/SonthaliaG20,DBLP:conf/icml/SalaSGR18} have been introduced to model hierarchical structures by leveraging favorable properties of hyperbolic space, such as its exponential growth. However, these methods primarily rely on pointwise distances to encode hierarchies, which is less intuitive for representing transitivity properties. In contrast, \textit{EntailmentCones} \cite{EntCone}, and \textit{ShadowCones} propose using regions in hyperbolic space that capture hierarchies through set inclusion, which can be regarded as region-based methods in hyperbolic spaces. 
Specifically, 
\textit{EntailmentCones} introduces closed-form hyperbolic cones, defined by their apex coordinates, while \textit{ShadowCones} use different cones are inspired by the physical interplay of light and shadow. However, as shown in Table~\ref{tab:region_hyperbolic}, these regions all take specific forms, limiting their generalizability to other tasks or methods.

\begin{figure*}[t]
    \centering
    \begin{subfigure}[b]{0.3\textwidth}
        \centering
        \begin{tikzpicture}[scale=0.8]
            \node (R1) at (0, 1.5) {\textit{lives}};
            \node (L1) at (-1, 0) {\textit{animal}};
            \node (L3) at (1, 0) {\textit{plant}};
            \node (L4) at (-2.2, -1.5) {$\cdots$};
            \node (L) at (-1, -1.5) {\textit{dog}};
            \node (L5) at (0.2, -1.5) {\textit{human}};
            \draw (R1) -- (L1);
            \draw (R1) -- (L3);
            \draw (L1) -- (L4);
            \draw (L1) -- (L5);
            \draw (L1) -- (L);
        \end{tikzpicture}
    \end{subfigure}
    \hfill
    \begin{subfigure}[b]{0.3\textwidth}
        \centering
        \begin{tikzpicture}[scale=0.8]
            \shade[top color=white, bottom color=black] (-3, -2) rectangle (-2.7, 1.5);
            \node (R1) at (0, 1) {$v_{\textit{lives}}$};
            \node (L1) at (-0.5, 0.2) {$v_{\textit{animal}}$};
            \node (L3) at (0.9, 0.1) {$v_{\textit{plant}}$};
            \node (L4) at (-0.9, -1.5) {$v_{\textit{dog}}$};
            \node (L) at (-1.8, -1.5) {$\cdots$};
            \node (L5) at (0, -1.5) {$v_{\textit{human}}$};
         \draw[dashed] (R1) to[out=-140, in=85] (L1);
         \draw[dashed] (R1) to[out=-20, in=100] (L3);
         \draw[dashed] (L1) to[out=-115, in=85] (L4);
         \draw[dashed] (L1) to[out=-60, in=95] (L5);
         \draw[dashed] (L1) to[out=-145, in=80] (L);

            \node at (-0.5, 1.6) {$\mathbb{H}^2$ {\small (half-space model)}};
            \draw[thick, ->] (-2.5, -2) -- (2.3, -2);
        \end{tikzpicture}
    \end{subfigure}
    \hfill
    \begin{subfigure}[b]{0.33\textwidth}
        \centering
        \begin{tikzpicture}[scale=0.8]
            \draw (0.5,0.3) rectangle (5.2,-2.7);
            \draw (0.7,0) rectangle (3.5,-2.5);
            \draw[fill=gray!5, draw=black] (1.3,-2.3) rectangle (2.1,-0.8);
            \draw[fill=gray!5, draw=black] (2.2,-2.3) rectangle (3.4,-0.8);
            \draw (3.6,-2.5) rectangle (5,-0.7);
            \node at (1.7,-1.5) {$B_{\textit{dog}}$};
            \node at (1,-1.5) {$\cdots$};
            \node at (1.8,-0.4) {$B_{\textit{animal}}$};
            \node at (2.8,-1.5) {$B_{\textit{human}}$};
            \node at (4.2,-0.1) {$B_{\textit{lives}}$};
            \node at (4.2,-1) {$B_{\textit{plant}}$};

\draw[thick,->] (-0.2,-3) -- (5,-3) node[anchor=west] {$\mathbb{R}^2$};
\draw[thick,->] (0,-3.2) -- (0,0) node[anchor=south] {};

        \end{tikzpicture}
    \end{subfigure}
    \caption{Demonstration of a taxonomy (left), its embeddings in the hyperbolic space (middle) and in the Euclidean space as boxes (right).}
    \label{fig:depthD_motivation}
\end{figure*}

\subsubsection{Ontologies}\label{app:ont} 
Ontologies use sets of statements (axioms) about concepts (unary predicates) and roles (binary predicates) for knowledge representation and reasoning. 
We focus on commonly used \(\mathcal{EL}\)-ontologies, defined as follows. Let \(\NC = \{A, B, \dots\}\), \(\NR = \{r, t, \dots\}\), and \(\NI = \{a, b, \dots\}\) be pairwise disjoint sets of \emph{concept names} (also called \emph{atomic concepts}), \emph{role names}, and \emph{individual names}, respectively. \emph{\(\mathcal{EL}\)-concepts} are recursively defined from atomic concepts, roles, and individuals as follows:
\(
\top \mid \bot \mid A \mid C \sqcap D \mid \exists r. C \mid \{a\}
\)
An \emph{\(\mathcal{EL}\)-ontology} is a finite set of TBox axioms of the form 
\(C \sqsubseteq D.
\)
Note that here \(\sqsubseteq\) denotes ``subsumption'', which is a transitive relation that can be considered a partial order \(\prec\) by regarding \(C \sqsubseteq D\) as 
\(D \prec C\).


\subsubsection{Ontology Embeddings}
Ontology embeddings aim to encode ontology entities into numerical vector representations to support a variety of downstream tasks, such as axiom prediction and logical inference. Existing approaches can be broadly categorized into two types: \textit{geometric model-based} methods \cite{yang2025transbox,DBLP:conf/www/JackermeierC024,peng_description_2022,kulmanov_embeddings_2019}, which represent ontology entities as geometric objects and construct a geometric model capturing the semantic structure of the target ontology ; and \textit{language model-based} methods \cite{yang2025language,DBLP:journals/ml/ChenHJHAH21}, which learn embeddings from textual information, typically by leveraging pretrained large language models (LLMs).





\section{Method}

\subsection{Depth Dissimilarity: A Similarity Measure Incorporating Depth}\label{sec:dep_dist}
Unlike the Euclidean space, which is constrained by crowding effects that limit its embedding capacity, the hyperbolic space leverages exponential growth in distance and volume to offer superior embedding capabilities. This property makes the hyperbolic space particularly effective for representing tree-structured data in low-dimensional spaces. Notably, two key distinctions arise between region-based embeddings in the Euclidean space and hyperbolic embeddings:

\begin{enumerate}[leftmargin=*]
\item \textit{The hyperbolic space better discriminates different hierarchical layers than the Euclidean space.}

Consider the taxonomy illustrated in Figure \ref{fig:depthD_motivation} (left). Intuitively, since \textit{human} is a subcategory of \textit{animal}, the semantic difference between \textit{human} and \textit{plant} should be greater than that between \textit{animal} and \textit{plant}. The hyperbolic space effectively captures this hierarchical relationship by permitting \(d(v_{\textit{human}}, v_{\textit{plant}}) > d(v_{\textit{animal}}, v_{\textit{plant}}) + \Delta\), where \(\Delta\) is an arbitrary gap. This arises from the property that the distance metric diverges to infinity near the boundary, as illustrated by the shadow dense of the bar on the left-hand side of \(\mathbb{H}^2\) in Figure \ref{fig:depthD_motivation} (middle). In contrast, region-based embeddings on Euclidean space may violate this hierarchical constraint, potentially placing the box \(B_{\textit{human}}\) close to \(B_{\textit{plant}}\), resulting in a distance (\textit{e.g.}, the Euclidean distance between box centers) 
similar to or even smaller than that between \(B_{\textit{animal}}\) and \(B_{\textit{plant}}\).

\item \textit{The hyperbolic space enables the distinct representation of an arbitrary number of child nodes.}

As demonstrated in Figure \ref{fig:depthD_motivation}, the box embeddings in the Euclidean space face inherent limitations when representing multiple children of a node, such as \textit{animal}. As the number of children increases, their corresponding boxes must cluster within \(B_{\textit{animal}}\), leading to crowding. In contrast, the hyperbolic space can accommodate an arbitrary number of children while maintaining distinct separations between them. This capability arises from the exponential growth of distance near the boundary of the hyperbolic space, which allows unlimited child nodes to be positioned distinctly by placing them progressively closer to the boundary while preserving meaningful distances between them.
\end{enumerate}


In the following sections, we introduce the notion of \textit{depth dissimilarity} for regions in Euclidean space, which explicitly accounts for their ``size.'' We show that this measure retains advantages analogous to those of hyperbolic spaces (Theorem~\ref{theo:dep_dist}), while offering a simpler structure that facilitates implementation (e.g., avoiding specialized techniques such as double-precision tensors and Riemannian-specific optimizers \cite{DBLP:conf/iclr/BecigneulG19}) and enabling potential extensions e.g., integration with ontology embeddings; see Section~\ref{sec:Ont_inf}). Moreover, we establish that depth dissimilarity encompasses hyperbolic distance as a special case (Proposition~\ref{prop:dep_eq}), while providing greater flexibility by preserving key properties of hyperbolic geometry through simple polynomial functions (Proposition \ref{prop:hyper}).

\subsubsection{Construction}
The depth dissimilarity serves as a similarity measure that quantifies the relationship between objects considering their hierarchical depth. As we use set-inclusion relations to model the hierarchy, this hierarchical depth can be represented through the size of the regions, such as their volumes or diameters. 
Formally, the depth dissimilarity is defined as follows, where the size of the regions is represented by a function $f(\reg)$:

\begin{defi}[Depth Dissimilarity]\label{def:dep}
Let $\Regions$ be a collection of \emph{parameterized regions}\footnote{The formal definition is defer to Section \ref{app:general_reg} for simplicity. Examples as boxes and balls are shown in Example \ref{exp:dist_depth_ball}, \ref{exp:dist_depth_box}.} in the $n$-dimensional Euclidean space $\R^n$, i.e., each region $\reg \in \Regions$ is characterized by an $m$-dim parameter $\para(\reg) \in \mathbb{R}^m$. The depth dissimilarity between two regions $\reg_1,\reg_2 \in \Regions$, is defined as:
\begin{equation}\label{eq:depth_dist}
\ddepth(\reg_1, \reg_2) = g\left(\frac{\|\para(\reg_1) - \para(\reg_2)\|_p^p}{f(\reg_1) f(\reg_2)}\right)
\end{equation}
where $\|\cdot\|_p$ is the $p$-norm (\textit{i.e.}, $||\x||_p=(\sum_i\x_i^p)^{1/p}$), and:
\begin{itemize}[leftmargin=*]
\item $g: \mathbb{R}_{\geq 0} \to \mathbb{R}_{\geq 0}$ is an increasing function such that $g(x) = 0$ if and only if $x = 0$,
\item $f: \mathcal{R} \to \mathbb{R}_{>0}$ is a function that measures the size of regions. It satisfies:
\(
\lim_{\reg \to \emptyset} f(\reg) = 0\footnote{$\reg \to \emptyset$ indicates that the region $\reg$ shrinks to the empty set (\textit{e.g.}, its volume or diameter tends to zero).}.
\)
\end{itemize}
\end{defi}

We require \(f\) and \(g\) to have non-negative values to ensure the depth dissimilarity is non-negative. Additionally, we stipulate that \(\lim_{\reg \to \emptyset} f(\reg) = 0\) to guarantee that as a region shrinks to an empty set, the dissimilarity between this object and others can approach infinity. This setting emulates the beneficial properties of the hyperbolic space, where the dissimilarity between two points can grow rapidly as they approach the boundary of the space (i.e., $\x_n=0$ in the Poincaré half-space model). In our context, the boundary of the space $\mathcal{R}$ of a collection of (parametrized) regions in the Euclidean space is the empty set. By selecting an appropriate function \(f\), we can control the rate at which the dissimilarity between two objects increases as they approach this boundary.

For clarity and tractability, we focus on the most representative region types, namely boxes and balls, to provide illustrative examples and empirical implementations. We defer the theoretical analysis of general parametrized region representations to Section~\ref{app:general_reg}.

\begin{example}\label{exp:dist_depth_ball}
Let 
\(
\Ball(\bc, r) = \{\x \mid \|\x - \bc\| \leq r\}
\) be a \emph{ball} defined by a center $\bc \in \R^n$ and a radius $r > 0$. 
By setting $g(x) = x$ and $f(\Ball) = r$, we obtain a depth dissimilarity of the form:
\begin{equation}\label{eq:dep_ball}
\ddepth(\Ball_1(\bc_1, r_1), \Ball_2(\bc_2, r_2)) = \frac{||\bc_1 - \bc_2||_p^p + |r_1 - r_2|^p}{r_1 \cdot r_2}.
\end{equation}
\end{example}

\begin{example}\label{exp:dist_depth_box}
Let $\boxes(\bc, \bo) = \{\x \in \R^n \mid \bc - \bo \leq \x \leq \bc + \bo\}$ be a \emph{box} defined by a center $\bc \in \R^n$ and an offset $\bo \in \R^n_{> 0}$. 
By setting $g(x) = x$ and $f(\boxes) = \|\bo\|$, we obtain a depth dissimilarity:
\begin{equation}\label{eq:dep_box}
\ddepth(\boxes_1(\bc_1, \bo_1), \boxes_2(\bc_2, \bo_2)) = \frac{||\bc_1 - \bc_2||_p^p + ||\bo_1 - \bo_2||_p^p}{\|\bo_1\| \cdot \|\bo_2\|}.
\end{equation}
\end{example}

The following result highlights that the depth dissimilarity exhibits properties analogous to those of the hyperbolic distance discussed earlier in this section. Specifically, the depth dissimilarity \textbf{(1)} effectively distinguishes concepts across different layers of the hierarchy with an arbitrary separation gap, denoted by $\Delta$ in item 1 of the following Theorem \ref{theo:dep_dist}, and \textbf{(2)} distinctly represents an arbitrary number $n$ of children within a single parent by a dissimilarity greater than $M$, as demonstrated in item 2 of the theorem. 


\begin{figure*}[t]
    \centering
    \begin{minipage}{0.48\textwidth}
        \centering
        \begin{tikzpicture}[scale=0.6]
            \begin{scope}[rotate around={35:(0,0)}]
            \fill[color=gray!20,  opacity=0.5]
            (0,0) ellipse (2 and 2);
            \draw[color=black, line width=1pt]
            (0,0) ellipse (2 and 2);
            
            \fill[color=gray!100, opacity=0.5]
            (0.3,0.0) ellipse (1 and 0.8);
            \draw[color=black, line width=1pt]
            (0.3,0.0) ellipse (1 and 0.8);
            \coordinate  (A) at (1.3, 0.0);
            \coordinate (B) at (2, 0);
            \draw[line width=2pt, red] (A) -- (B);
    
            \coordinate (A1) at (-0.7, 0);
            \coordinate (B1) at (-2, 0);
            \draw[dashed, line width=1pt, green] (A1) -- (B1);
            \end{scope}
    
            \node at (1.6, 1.9) {$\reg_1^c$};
            \node at (-1, 1) {$\reg_1$};
            \node at (0.3, 0.1) {$\reg_2$};
    
            \begin{scope}[shift={(4.5,0,0)}, rotate around={35:(0,0)},]
            \fill[color=gray!20,  opacity=0.5]
            (0,0) ellipse (2 and 2);
            \draw[color=black, line width=1pt]
            (0,0) ellipse (2 and 2);
    
            \fill[color=gray!100,  opacity=0.5]
            (1.8,0.0) ellipse (1 and 0.8);
            \draw[color=black, line width=1pt]
            (1.8,0.0) ellipse (1 and 0.8);
            \coordinate  (A) at (2.8, 0.0);
    
            \coordinate (B) at (2, 0);
            \draw[line width=2pt, red] (A) -- (B);
    
            \coordinate  (A1) at (0.8, 0.0);
            \coordinate (B1) at (-2, 0);
            \draw[dashed, line width=1pt, green] (A1) -- (B1);
            \end{scope}
    
            \node at (3.5, 0.5) {$\reg_1$};
            \node at (6.8, 0.3) {$\reg_2$};
        \end{tikzpicture}
        \caption{Illustration of $\dboundary(\reg_1,\reg_2)$ (red) for $\reg_2 \subseteq \reg_1$ (left) or $\reg_2 \not\subseteq \reg_1$ (right). Green lines shows the inverse: $\dboundary(\reg_2,\reg_1)$.}
        \label{fig:boundary_dist}
    \end{minipage}
    \hfill
    \begin{minipage}{0.48\textwidth}
        \centering
        \begin{tikzpicture}[scale=0.6]
            \fill[color=gray!20,  opacity=0.5]
                (0,0) ellipse (2.5 and 2);
            \draw[color=black, line width=1pt]
                (0,0) ellipse (2.5 and 2);
            
            \begin{scope}[rotate around={35:(0,0)}]
                \fill[color=gray!60, opacity=0.5]
                (0.5,0.0) ellipse (1.7 and 1.2);
                \draw[color=black, line width=1pt]
                (0.5,0.0) ellipse (1.7 and 1.2);
            \end{scope}
    
            \node at (0.2, -0.3) {$\reg_2$};
            \node at (-1.8, 0.5) {$\reg_1$};
            
            \begin{scope}[shift={(5.4,0,0)}]
                \fill[color=gray!20,  opacity=0.5]
                (0,0) ellipse (2.5 and 2);
                \draw[color=black, line width=1pt]
                (0,0) ellipse (2.5 and 2);
        
                \begin{scope}[rotate around={35:(0,0)}]
                    \fill[color=gray!60, opacity=0.5]
                    (0.3,0.0) ellipse (1.7 and 1.2);
                    \draw[color=black, line width=1pt]
                    (0.3,0.0) ellipse (1.7 and 1.2);
                   
                    \fill[color=gray!100, opacity=0.5]
                    (0.5,0.0) ellipse (0.8 and 0.8);
                    \draw[color=black, line width=1pt]
                    (0.5, 0.0) ellipse (0.8 and 0.8);
                \end{scope}
        
                \node at (-1.8, 0.5) {$\reg_1$};
                \node at (-0.5, -0.7) {$\reg_2$};
                \node at (0.3, 0.1) {$\reg_3$};
            \end{scope}
        \end{tikzpicture}
        \caption{Illustration of internally tangent of item 1 (left) and item 2 (right) in Proposition \ref{prop:depth_hyper}.}
        \label{fig:boundary_dist_prop}
    \end{minipage}
\end{figure*}

\begin{theo}\label{theo:dep_dist}
    Consider the region space \(\mathcal{B}^n\) consisting of \textbf{balls} in $\R^n$, with the depth dissimilarity defined in Example \ref{exp:dist_depth_ball}. The following properties hold:
    \begin{enumerate}[leftmargin=*]
        \item For any \(\Ball_1, \Ball_2 \in \mathcal{B}^n\) and any \(\Delta > 0\), there exists a positive constant \(r_0\) such that for any \(\Ball(\bc', r') \subseteq \Ball_2\), if \(r' \leq r_0\), then 
        \(
        \ddepth(\Ball_1, \Ball(\bc', r')) > \ddepth(\Ball_1, \Ball_2) + \Delta.
        \)
        \item For any \(\Ball \in \mathcal{B}^n\), any integer \(n\) and any \(M > 0\), there exist subsets \(\Ball_1, \ldots, \Ball_n \subseteq \Ball\) such that for any distinct \(i, j \in \{1, \ldots, n\}\), we have
        \(
        \ddepth(\Ball_i, \Ball_j) > M.
        \)
        
    \end{enumerate}
    The same conclusions hold for boxes, where \(r_0 \in \mathbb{R}_{> 0}\) is replaced with \(\bo_0 \in (\R_{>0})^n\), and the condition \(r' \leq r_0\) is replaced by \(\bo' \leq \bo_0\) element-wise.
\end{theo}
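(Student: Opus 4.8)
The plan is to read off both items directly from the explicit ball formula in Example~\ref{exp:dist_depth_ball}, exploiting the single mechanism that the denominator $r_1 r_2$ can be forced to zero while the numerator stays bounded away from zero, so the ratio diverges. For item~1, fix $\Ball_1=\Ball(\bc_1,r_1)$, $\Ball_2$, and $\Delta>0$, and set $C:=\ddepth(\Ball_1,\Ball_2)+\Delta$, a fixed finite number. For any $\Ball(\bc',r')\subseteq\Ball_2$ we have
$$\ddepth(\Ball_1,\Ball(\bc',r'))=\frac{\|\bc_1-\bc'\|_p^p+|r_1-r'|^p}{r_1 r'}.$$
The crucial first step is a lower bound on the numerator that is \emph{uniform} over all admissible sub-balls: once $r'\le r_1/2$ we have $|r_1-r'|^p\ge(r_1/2)^p$, so the numerator is at least $(r_1/2)^p>0$ regardless of where $\bc'$ sits. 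Discarding the nonnegative center term then gives $\ddepth(\Ball_1,\Ball(\bc',r'))\ge (r_1/2)^p/(r_1 r')$, which is decreasing in $r'$. Taking $r_0:=\min\{r_1/2,\ (r_1/2)^p/(2r_1C)\}$ makes the right-hand side exceed $C$ for every $r'\le r_0$, which is the assertion.

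For item~2 the plan is an explicit construction rather than a limit. Choose $n$ distinct centers $\bc_1,\dots,\bc_n$ sitting strictly inside $\Ball=\Ball(\bc,r)$, and let $\delta:=\min_{i\ne j}\|\bc_i-\bc_j\|_p>0$ and $d_0:=\max_i\|\bc_i-\bc\|<r$. Give all balls one common radius $\rho$, so the radius-difference term vanishes and
$$\ddepth(\Ball_i,\Ball_j)=\frac{\|\bc_i-\bc_j\|_p^p}{\rho^2}\ge\frac{\delta^p}{\rho^2}.$$
Picking $\rho<\min\{r-d_0,\ \delta^{p/2}/\sqrt{M}\}$ simultaneously keeps every $\Ball_i=\Ball(\bc_i,\rho)$ inside $\Ball$ (first term) and forces $\ddepth(\Ball_i,\Ball_j)>M$ for all $i\ne j$ (second term). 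This realizes exactly the picture from the motivation: shrinking regions toward the ``boundary'' $\emptyset$ inflates pairwise dissimilarity without bound, so arbitrarily many children fit distinctly inside one parent.

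The box statement follows verbatim with the offset norm playing the role of the radius (Example~\ref{exp:dist_depth_box}). For item~1, whenever $\bo'\le\tfrac12\bo_1$ element-wise the offset term obeys $\|\bo_1-\bo'\|_p^p\ge(\tfrac12\|\bo_1\|_p)^p>0$, giving a uniform numerator bound, while the denominator $\|\bo_1\|\,\|\bo'\|$ can be made small by shrinking the admissible cap $\bo_0$ in norm; choosing $\bo_0\le\tfrac12\bo_1$ with small enough $\|\bo_0\|$ reproduces the estimate above. For item~2 one fixes a common offset $\bo$ and distinct centers, so that $\ddepth(\boxes_i,\boxes_j)=\|\bc_i-\bc_j\|_p^p/\|\bo\|^2$, and then shrinks $\|\bo\|$.

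I expect the only genuinely delicate point to be the uniform control of the numerator in item~1. Because $\Ball(\bc',r')$ ranges over \emph{all} sub-balls of $\Ball_2$, the center-distance term $\|\bc_1-\bc'\|_p^p$ cannot be pinned down, so the divergence cannot be attributed to it; it must instead be driven by the radius-difference term together with the vanishing denominator. Recognizing that discarding the center term (rather than estimating it) already suffices is the key move, after which both items reduce to elementary one-variable estimates.
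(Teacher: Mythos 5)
Your proof is correct and follows essentially the same route as the paper's: bound the numerator below by $(r_1/2)^p$ via the radius-difference term once $r'\le r_1/2$, then drive the denominator to zero to force divergence (and, for item~2, shrink well-separated sub-balls until the pairwise dissimilarity exceeds $M$). The only difference is cosmetic — you instantiate $f(\Ball)=r$ explicitly and use a common radius in item~2, whereas the paper phrases the same estimates through the general size function $f$ and its limit condition.
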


\begin{proof}
\textbf{For item 1:} Let \(\reg_1 = \Ball(\bc_1, r_1)\) and \(\reg_2 = \Ball(\bc_2, r_2)\). For any ball \(\Ball(\bc', r') \subseteq \Ball(\bc_2, r_2)\) with \(r' \leq 0.5 \cdot r_1\), we have:
     \(
     ||\bc_1 - \bc'||^p_p + |r_1 - r'|^p > (0.5 \cdot r_1)^p.
     \)
    Therefore:
    \begin{align*}
     \ddepth(\reg_1, \Ball(\bc', r')) = \frac{||\bc_1 - \bc'||^p_p + |r_1 - r'|^p}{f(\reg_1) f(\Ball(\bc', r'))}\\
     > \frac{\left( 0.5 \cdot  r_1 \right)^p}{f(\reg_1) f(\Ball(\bc', r'))}.
    \end{align*}
    Thus, when $f(\Ball(\bc', r')) < \epsilon$ with 
    \[
    \epsilon := \frac{\left( 0.5 \cdot  r_1 \right)^p}{f(\reg_1) [\ddepth(\reg_1, \reg_2)+\Delta]},
    \]
    we have \(\ddepth(\reg_1, \Ball(\bc', r')) > \ddepth(\reg_1, \reg_2) + \Delta\). 
    Since 
    \(
    \lim_{\reg \to \emptyset} f(\reg) = 0
    \) by assumption, 
     there exists a \(\delta > 0\) such that when \(r < \delta\), we have \(f(\reg) \leq \epsilon\) for any region \(\reg = \Ball(\bc, r)\). In conclusion, \(r_0 = \min\{\delta, 0.5 \cdot r_1\}\) satisfies the required condition.

\textbf{For item 2: } Let \(\reg = \Ball(\bc, r)\). For any \(n, M > 0\), we can select \(n\) distinct vectors \(\bc_1, \ldots, \bc_n \in \Ball(\bc, 0.5 \cdot r)\) such that: \(\| \bc_i - \bc_j \|_p^p > \delta > 0 \) for some \(\delta > 0\).

    Similarly to item 1, we can choose \(r_i\) small enough such that for any ball \(\Ball(\bc_i, r_i)\), we have:
    \(
    f(\Ball(\bc_i, r_i)) < \left( \delta/M \right)^{0.5}.
    \)
    Thus, $ \ddepth(\Ball(\bc_i, r_i), \Ball(\bc_j, r_j))$ is bigger than: 
    \begin{align*}
      \frac{\delta}{f(\Ball(\bc_i, r_i)) f(\Ball(\bc_j, r_j))}\geq M.
    \end{align*}

The proof for the case of boxes follows the same reasoning, with the radius \(r\) replaced by the offset \(\bo\) or its norm $||\bo||$.
\end{proof}



\subsubsection{Comparison with the Hyperbolic Distance}
The following theorem shows that our depth dissimilarity encompasses hyperbolic distance as a special case, obtained with specific choices of functions \( f, g \) in Definition \ref{def:dep}.

\begin{prop}\label{prop:dep_eq}  
Let \(\mathbb{H}^{n+1}\) be the hyperbolic space with curvature \(-1\). Assume the ball space \(\mathcal{B}^n\) is parameterized as in Example \ref{exp:dist_depth_ball}, and equipped with the depth dissimilarity defined in Equation (\ref{eq:depth_dist}).  Then the map $F \colon \mathcal{B}^n \to \mathbb{H}^{n+1} $ defined by $F(\Ball(\bc, r))= [\bc : r]$ 
is an \emph{bijective isometry} when \( p = 2 \), \( g(x) = \operatorname{arcosh}(x + 1) \), and \( f(\Ball(\bc, r)) = \sqrt{2}\,r \). 
\end{prop}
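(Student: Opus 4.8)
The plan is to substitute the prescribed choices $p=2$, $g(x)=\operatorname{arcosh}(x+1)$, and $f(\Ball(\bc,r))=\sqrt{2}\,r$ directly into Definition~\ref{def:dep} and to recognise the resulting expression as the half-space hyperbolic distance at curvature $-1$. First I would record that a ball $\Ball(\bc,r)\in\mathcal{B}^n$ is parameterised by $\para(\Ball(\bc,r))=[\bc:r]\in\mathbb{R}^{n+1}$, so that for two balls $\Ball_1(\bc_1,r_1)$ and $\Ball_2(\bc_2,r_2)$ the squared parameter distance is $\|[\bc_1:r_1]-[\bc_2:r_2]\|_2^2=\|\bc_1-\bc_2\|_2^2+|r_1-r_2|^2$ while the size product is $f(\Ball_1)f(\Ball_2)=(\sqrt{2}\,r_1)(\sqrt{2}\,r_2)=2r_1r_2$. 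Plugging these into Equation~(\ref{eq:depth_dist}) gives
\[
\ddepth(\Ball_1,\Ball_2)=\operatorname{arcosh}\!\left(1+\frac{\|\bc_1-\bc_2\|_2^2+|r_1-r_2|^2}{2r_1r_2}\right).
\]

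Next I would write down the hyperbolic distance in the half-space model $U^{n+1}=\{\x\in\mathbb{R}^{n+1}:\x_{n+1}>0\}$, as recalled in the preliminaries with $k=1$, between the image points $F(\Ball_1)=[\bc_1:r_1]$ and $F(\Ball_2)=[\bc_2:r_2]$. The last coordinate of $[\bc:r]$ equals $r$, which is strictly positive exactly because radii are positive, so both images genuinely lie in $U^{n+1}$. Substituting $\x_{n+1}=r_1$, $\y_{n+1}=r_2$, and $\|\x-\y\|_2^2=\|\bc_1-\bc_2\|_2^2+|r_1-r_2|^2$ into $d_k$ reproduces the right-hand side displayed above verbatim. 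Hence $d_1(F(\Ball_1),F(\Ball_2))=\ddepth(\Ball_1,\Ball_2)$ for every pair of balls; note that although $\ddepth$ is in general not a metric, with this particular choice of $g$ it coincides identically with a genuine hyperbolic distance, which is the isometry claim.

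It remains to establish bijectivity, which I would read off the explicit parameterisation. The assignment $(\bc,r)\mapsto[\bc:r]$ identifies $\mathcal{B}^n$ with $\mathbb{R}^n\times\mathbb{R}_{>0}$, and $F$ carries this set onto precisely the points of $\mathbb{R}^{n+1}$ with strictly positive last coordinate, i.e.\ all of $U^{n+1}$. Distinct balls have distinct centre--radius pairs and hence distinct images (injectivity), while every point $\x\in U^{n+1}$ is the image of the unique ball whose centre is $(\x_1,\dots,\x_n)$ and whose radius is $\x_{n+1}>0$ (surjectivity). Combined with the distance identity, this shows $F$ is a bijective isometry.

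I do not expect a genuine obstacle in this argument: once the substitution is performed the two formulas agree term by term, so the proof is essentially a verification. The only points requiring care are bookkeeping ones---matching the dimension conventions (a ball in $\mathbb{R}^n$ carries $n+1$ parameters and maps into $\mathbb{H}^{n+1}$), and observing that the radius plays the role of the height coordinate, so that the positivity constraint $r>0$ is exactly the half-space condition $\x_{n+1}>0$. These two observations are what guarantee that the stated $f$ supplies precisely the factor $2r_1r_2$ in the denominator and that $F$ surjects onto the correct model.
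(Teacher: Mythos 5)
Your proposal is correct and follows essentially the same route as the paper's proof: substitute the prescribed $p$, $g$, and $f$ into Equation~(\ref{eq:depth_dist}) and observe that the result coincides term by term with the half-space hyperbolic distance at curvature $-1$ applied to $[\bc:r]$. The only difference is that you spell out the bijectivity of $(\bc,r)\mapsto[\bc:r]$ onto $U^{n+1}$, which the paper leaves implicit; this is a welcome but minor addition, not a different argument.
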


\begin{proof}
Recall that when the curvature is \(-1\), the distance in the hyperbolic space (half-plane model) is given by:
\[
d_k(\x, \y) = \operatorname{arcosh}\left( 1 + \frac{\|\x - \y\|^2_2}{2 \x_n \y_n} \right).
\]
Then, the distance induced by the function \(F\) is of the form:
\begin{align}
  d^\#_{\mathcal{B}^n}(\Ball(\bc, r), &\Ball'(\bc'. r')) = \\
  &\operatorname{arcosh}\left( 1 + \frac{\|\bc - \bc'\|^2_2 + (r - r')^2}{2 r r'} \right).  
\end{align}

This coincides with the depth dissimilarity in Example \ref{exp:dist_depth_ball} when \( p = 2 \), \( g(x) = \operatorname{arcosh}(x + 1) \), and \( f(\Ball(\bc, r)) = \sqrt{2}\,r \). This completes the proof.
\end{proof}

Moreover, the following result demonstrates that even with a simple linear function $g(\cdot)$, our depth dissimilarity retains the ability to capture the hyperbolic structure. Specifically, there exists a map from the region space to the hyperbolic space that preserves the order of hyperbolic distances for all pairs of points, as stated below.

\begin{prop}\label{prop:hyper}Following Proposition \ref{prop:dep_eq},  let the depth dissimilarity $\ddepth$ be redefined by replacing \( g \) to \( g(x) = k\cdot x\, (k>0)\). Then the map \( F\) retains the following monotonicity property: for any points \(\x_1, \x_2, \x_3, \x_4 \in \mathbb{H}^n\),
\(
d_{\mathbb{H}^n}(\x_1, \x_2) < d_{\mathbb{H}^n}(\x_3, \x_4)
\)
if and only if
\[
\ddepth(F^{-1}(\x_1), F^{-1}(\x_2)) < \ddepth(F^{-1}(\x_3), F^{-1}(\x_4)).
\]
\end{prop}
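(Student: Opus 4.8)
The plan is to collapse both sides of the claimed equivalence to a comparison of a single shared scalar, and then invoke strict monotonicity twice. For two balls $\Ball_i = \Ball(\bc_i, r_i)$ and $\Ball_j = \Ball(\bc_j, r_j)$ I would introduce the quantity
\[
A_{ij} = \frac{\|\bc_i - \bc_j\|_2^2 + (r_i - r_j)^2}{2\, r_i r_j} \;\geq\; 0,
\]
which is precisely the argument fed into $g$ in Equation~(\ref{eq:depth_dist}) under the choices $p = 2$ and $f(\Ball(\bc,r)) = \sqrt{2}\,r$ of Proposition~\ref{prop:dep_eq}. Writing $\x_i = F(\Ball_i)$ in the hyperbolic space of Proposition~\ref{prop:dep_eq}, that proposition supplies the identity $d_{\mathbb{H}}(\x_i, \x_j) = \operatorname{arcosh}(A_{ij} + 1)$, since it establishes $F$ as an isometry for $g(x) = \operatorname{arcosh}(x+1)$. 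With $g$ instead replaced by the affine map $g(x) = kx + b$, the redefined dissimilarity is simply $\ddepth(F^{-1}(\x_i), F^{-1}(\x_j)) = k\,A_{ij} + b$. Thus both the hyperbolic distance and the new depth dissimilarity are expressed as functions of the one common quantity $A_{ij}$.

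Next I would observe that each of the two maps $A \mapsto \operatorname{arcosh}(A+1)$ and $A \mapsto kA + b$ is \emph{strictly increasing} on $[0,\infty)$: the former because $\operatorname{arcosh}$ is strictly increasing on its domain $[1,\infty)$ and $A_{ij}+1 \geq 1$, and the latter because $k > 0$. Applying this to the four points with their associated scalars $A_{12}$ and $A_{34}$ gives, from strict monotonicity of $\operatorname{arcosh}(\cdot+1)$ (and hence of its inverse),
\[
d_{\mathbb{H}}(\x_1,\x_2) < d_{\mathbb{H}}(\x_3,\x_4) \iff A_{12} < A_{34},
\]
and from strict monotonicity of the affine map,
\[
A_{12} < A_{34} \iff \ddepth(F^{-1}(\x_1), F^{-1}(\x_2)) < \ddepth(F^{-1}(\x_3), F^{-1}(\x_4)).
\]
Chaining the two equivalences yields exactly the stated bi-implication. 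Because $F$ is a bijection by Proposition~\ref{prop:dep_eq}, every point in the relevant hyperbolic space equals $F(\Ball)$ for some ball, so $F^{-1}$ is well defined on all four points and the argument is complete.

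I do not expect a genuine obstacle here: the entire content is that the hyperbolic distance and the affine depth dissimilarity are both strictly monotone functions of the single underlying quantity $A_{ij}$, so they induce identical orderings on pairs. The only points needing minor care are confirming $A_{ij} \geq 0$ so the argument of $\operatorname{arcosh}$ remains in $[1,\infty)$, and emphasizing that it is \emph{strict} monotonicity (not mere monotonicity) that upgrades the implication to an ``if and only if.'' I would also flag the dimension bookkeeping, reading the points $\x_1,\dots,\x_4$ as lying in the target space of $F$ from Proposition~\ref{prop:dep_eq}, so that $F^{-1}$ is applied in the correct space; no constant or curvature normalization beyond what Proposition~\ref{prop:dep_eq} already fixes is required.
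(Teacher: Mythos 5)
Your proposal is correct and follows essentially the same route as the paper's proof: both reduce the claim to the observation that the hyperbolic distance and the affine dissimilarity are strictly increasing functions of the same underlying scalar, so they induce the same ordering on pairs. If anything, your write-up is slightly more careful than the paper's, which implicitly treats only the case $k=1$, $b=0$ by writing $\ddepth = g^{-1}(d_{\mathbb{H}^n})$, whereas you handle the general affine $g$ explicitly.
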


\begin{proof}
This proposition follows directly from Proposition \ref{prop:dep_eq}. The function \( h(x) = \operatorname{arcosh}(x + 1) \) is an increasing bijection from \(\mathbb{R}_{\geq 0}\) to \(\mathbb{R}_{\geq 0}\). Thus, for any \( x, x' \geq 0 \), we have:
\[
x \leq x' \iff h^{-1}(x) \leq h^{-1}(x').
\]
By assumption, \(\ddepth(\cdot, \cdot) = k\cdot h^{-1}(d_{\mathbb{H}^n}(\cdot, \cdot))\) for some $k>0$. Therefore, for any points \(\x_1, \x_2, \x_3, \x_4 \in \mathbb{H}^n\), we have:
\(
d_{\mathbb{H}^n}(\x_1, \x_2) < d_{\mathbb{H}^n}(\x_3, \x_4)
\)
if and only if
\[
\ddepth(F^{-1}(\x_1), F^{-1}(\x_2)) < \ddepth(F^{-1}(\x_3), F^{-1}(\x_4)).
\]
\end{proof}

It is important to recognize that evaluation metrics such as F1 scores and Hits@k rely solely on the ranking induced by the scoring function. As a result, only the relative order of the scores matters, while their absolute values are less important. Therefore, preserving the same ranking as that produced by hyperbolic distances is sufficient to attain comparable performance. Consequently, the depth-based dissimilarity defined in Equations~\ref{eq:dep_ball} and~\ref{eq:dep_box}, with the function \( g(x) = x \), are well-suited for our implementation, as justified by Proposition~\ref{prop:depth_hyper}.

\subsection{Boundary Dissimilarity: A Non-Symmetric Measure of Inclusion} \label{sec:dboundary}

Although the depth dissimilarity $\ddepth$ introduced above has been shown to have great power for embedding hierarchical data, it is a symmetric metric and therefore inadequate for fully capturing the inherently non-symmetric hierarchical relationships between objects. To address this limitation, we introduce the boundary dissimilarity, specifically designed to reflect the partial order of regions defined by set inclusion. 

Our boundary dissimilarity generalizes the signed-distance-to-boundary in ShadowCone (Theorem 4.2, \cite{ShadowCone}) by extending its applicability from specialized hyperbolic cone geometries to arbitrary Euclidean regions. This generalization is formally defined in Definition~\ref{def:bd}. 
As a result, our boundary dissimilarity can be computed in a much simpler form (Example~\ref{exp:boundary_dist}) compared to the formulation used in ShadowCone (Equation~\ref{eq:cone_dist}).

\paragraph{Construction}
 The boundary dissimilarity is defined to measure the minimal cost associated with transforming the spatial relationship between two regions $\reg_1$ and $\reg_2$. Specifically, it quantifies the cost of moving $\reg_2$ out of $\reg_1$ when $\reg_2 \subseteq \reg_1$, or moving $\reg_2$ into $\reg_1$ otherwise (when $\reg_2 \not\subseteq \reg_1$). This cost can be defined for arbitrary geometric objects based on either distance or volume within Euclidean or other spaces. Below, we introduce a boundary dissimilarity based on the Euclidean distance for two regions $\reg_1, \reg_2 \subseteq \mathbb{R}^n$, which consists of two cases:

\begin{enumerate}[leftmargin=*]
\item \textit{Containment (\(\reg_2 \subseteq \reg_1\)):} As illustrated in the left of Figure~\ref{fig:boundary_dist}, when \(\reg_2\) is fully contained within \(\reg_1\), the boundary dissimilarity is defined by the minimum Euclidean distance between the complementary region \(\reg_1^c\) and the points in \(\reg_2\) (\textit{i.e.}, length of the red line). This distance quantifies the minimum translation cost required to move at least a part of \(\reg_2\) out of \(\reg_1\).

\item \textit{Non-Containment (\(\reg_2 \not\subseteq \reg_1\)):} As shown in the right of Figure~\ref{fig:boundary_dist}, when \(\reg_2\) is not fully contained within \(\reg_1\), the boundary dissimilarity is defined as the maximum Euclidean distance from the points in \(\reg_2 \setminus \reg_1\) to \(\reg_1\) (\textit{i.e.}, length of the red line). This distance quantifies the minimum translation cost for moving \(\reg_2\) into \(\reg_1\).
\end{enumerate}

Let $d(\reg, \x):=\min\{||\x-\y||_2\mid \y\in\reg\}$ be the distance of a points $\x$ to a region $\reg$ defined by the minimal distance from $\x$ to $\y\in\reg$.
The formal definition of the boundary dissimilarity is as follows:

\begin{defi}[Boundary Dissimilarity]\label{def:bd}Given a  region space \(\mathcal{R}\), we define the boundary dissimilarity over $\reg_1, \reg_2\in \mathcal{R}$ by:
\begin{equation}\label{eq:boundary_dist}
\dboundary(\reg_1, \reg_2) = \begin{cases}
-\min\limits_{\x_2\in \reg_2}\{d(\reg_1^c, \x_2)\} & \text{ if }\reg_2\subseteq \reg_1\\
\max\limits_{\x_2 \in \reg_2\setminus\reg_1}\{d(\reg_1, \x_2)\} & \text{ else }.
\end{cases}
\end{equation}
\end{defi}
Note that a negative sign is added to \(\dboundary(\reg_1, \reg_2)\) in the containment case (\(\reg_2 \subseteq \reg_1\)) to clearly distinguish it from other cases. 
Moreover, the boundary dissimilarity is inherently asymmetric, that is, \(\dboundary(\reg_1, \reg_2) \neq \dboundary(\reg_2, \reg_1)\) in general. For example, as illustrated in Figure \ref{fig:boundary_dist}, the boundary dissimilarity in the reverse order, \(\dboundary(\reg_2, \reg_1)\), corresponds to the length of the green dashed line, which differs from the red line representing \(\dboundary(\reg_1, \reg_2)\).

Moreover, for widely used geometric objects such as balls and boxes, the boundary dissimilarity can be computed efficiently using simple arithmetic operations.

\begin{example}\label{exp:boundary_dist}
If \(\reg_1 = \Ball(\bc_1, r_1)\) and \(\reg_2 = \Ball(\bc_2, r_2)\), the boundary dissimilarity have the form (here the two cases can be unified into a single formula):
\[
\dboundary(\reg_1, \reg_2) = \|\bc_1 - \bc_2\|_2 + r_2 - r_1 \big.
\]
For the case of boxes, where \(\reg_1 = \boxes(\bc_1, \bo_1)\) and \(\reg_2 = \boxes(\bc_2, \bo_2)\), the boundary dissimilarity have the form:
\begin{align*}
\dboundary&(\reg_1, \reg_2) =\\
&\begin{cases}
   \max(|\bc_1 - \bc_2|+ \bo_2 - \bo_1) & \textit{ if } \reg_2\subseteq\reg_1,\\
  ||\max\{|\bc_1 - \bc_2| + \bo_2 - \bo_1, \mathbf{0}\}||_2 & \textit{else}.
\end{cases}
\end{align*}
Here, \(\max(\cdot)\) in the first line denotes the maximal value along all dimensions, while \(\max\{\cdot, \cdot\}\) in the second line applies element-wise to the two vectors or values.
\end{example}


The following proposition demonstrates that our definition of the boundary dissimilarity effectively captures the inclusion relationship between two regions in two key aspects, as illustrated in Figure \ref{fig:boundary_dist_prop}: 
(i) it identifies whether one region is (exactly) contained within another, and (ii) it enhances discrimination in inclusion chains, as smaller regions tend to have larger boundary dissimilaritys. This property is useful for distinguishing shallow children from deeper ones. It is worth noting that the proposition below applies to any regions, not only boxes or balls.

\begin{prop}\label{prop:depth_hyper}
For the boundary dissimilarity \(\dboundary\) in Definition \ref{def:bd}, the following properties hold:
\begin{enumerate}[leftmargin=*]
    \item \(\dboundary(\reg_1, \reg_2) \leq 0\) if and only if \(\reg_2 \subseteq \reg_1\). Moreover, \(\dboundary(\reg_1, \reg_2) = 0\) if and only if \(\reg_1\) is \emph{internally tangent} to \(\reg_2\). That is, \(\reg_1 \subseteq \reg_2\), and their boundaries intersect at some point.
    \item \(
    \dboundary(\reg_1, \reg_3) \leq \dboundary(\reg_1, \reg_2)
    \) if \(\reg_3 \subseteq \reg_2 \subseteq \reg_1\).
    
\end{enumerate}
\end{prop}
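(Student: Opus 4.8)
The plan is to derive everything directly from Definition~\ref{def:bd} by analyzing its two branches separately; no machinery beyond elementary properties of the point-to-set distance $d(\cdot,\cdot)$ and of closed regions is needed, so the whole argument is a careful sign-and-boundary analysis. The only structural hypothesis I would invoke is that the regions are closed (so that their set-theoretic boundary is the topological $\partial$), which holds automatically for the balls and boxes of Examples~\ref{exp:dist_depth_ball} and~\ref{exp:dist_depth_box}.

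For the first assertion of item~1, I would first observe that the two branches of $\dboundary$ are sign-separated, so that the sign of $\dboundary(\reg_1,\reg_2)$ detects exactly which branch of Definition~\ref{def:bd} is active and hence records whether the containment condition defining that branch holds. In the containment branch every quantity $d(\reg_1^c,\x)$ is non-negative, so with the leading minus sign that branch is always $\le 0$. In the other branch the set over which the maximum is taken is non-empty, and each of its points lies outside the closed region $\reg_1$, hence has strictly positive distance to $\reg_1$; consequently that branch is strictly positive. Combining the two observations gives the equivalence in both directions. I would flag closedness here as precisely the hypothesis that makes the second branch strict.

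For the equality characterization in item~1, since $\dboundary=0$ already forces the non-positive branch, I would note that this branch equals $0$ if and only if $\min_{\x} d(\reg_1^c,\x)=0$, i.e.\ some point $\x$ of the contained region satisfies $d(\reg_1^c,\x)=0$. Because $d(\reg_1^c,\x)=0$ means $\x\in\overline{\reg_1^c}$ while $\x$ also lies in $\reg_1$, this forces $\x\in\partial\reg_1$. The short topological step is then to show such an $\x$ also lies on the boundary of the contained region: if $\x$ were interior to it, a neighbourhood of $\x$ would sit inside that region and hence inside $\reg_1$, contradicting $\x\in\partial\reg_1$, so $\x$ lies on both boundaries, which is exactly internal tangency. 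Running this chain of equivalences backwards gives the converse. I expect this interior/boundary bookkeeping to be the only genuinely delicate point of the whole proposition.

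Item~2 is then essentially immediate. Under $\reg_3\subseteq\reg_2\subseteq\reg_1$ both $\dboundary(\reg_1,\reg_3)$ and $\dboundary(\reg_1,\reg_2)$ fall in the containment branch, so the claim reduces to monotonicity of the infimum of $d(\reg_1^c,\cdot)$ over a shrinking domain: from $\reg_3\subseteq\reg_2$ one gets $\min_{\x\in\reg_3} d(\reg_1^c,\x)\ge \min_{\x\in\reg_2} d(\reg_1^c,\x)$, since minimizing over a smaller set can only increase the value, and negating both sides flips this into $\dboundary(\reg_1,\reg_3)\le\dboundary(\reg_1,\reg_2)$.
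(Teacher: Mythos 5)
Your argument is essentially the paper's own proof: both establish the sign characterization by noting the containment branch is always $\le 0$ while the non-containment branch is strictly positive for closed regions, both derive tangency from a point $\x$ with $d(\reg_1^c,\x)=0$ lying on $\partial\reg_1$ and then on the boundary of the inner region, and both reduce item~2 to monotonicity of the extremum over nested sets. Your version is marginally more careful than the paper's (you flag closedness up front and give a cleaner interior-neighbourhood argument for why $\x$ lies on the inner region's boundary), but there is no substantive difference in approach.
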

\begin{proof}
We prove each item one-by-one:
\begin{enumerate}
    \item By definition, we have \(\dboundary(\reg_1, \reg_2) \leq 0\) if and only if \(\reg_2 \subseteq \reg_1\). Next, we focus on the case \(\dboundary(\reg_1, \reg_2) = 0\).

    Note that if \(\dboundary(\reg_1, \reg_2) = 0\), we must have $\reg_2\subseteq \reg_1$. Otherwise, we have 
    \[
    \dboundary(\reg_1, \reg_2) = \max\limits_{\x_2 \in \reg_2\setminus\reg_1}\{d(\reg_1, \x_2)\} = 0
    \]
    Therefore, for any $\x_2\in \reg_2$, we have $d(\reg_1, \x_2)=0$, therefore $\x_2\subseteq \reg_1$ (assuming $\reg_1$ is a closed set). Contradiction!

    Since $\reg_1\subseteq \reg_2$, we have 
    \[
    \dboundary(\reg_1, \reg_2) = \max\limits_{\x_2\in \reg_2}\{-d(\reg_1^c, \x_2)\} =0.
    \]
    Therefore, there must exist \(\x_2\in \reg_2\) such that \(d(\reg_1^c, \x_2)=0\), and thus \(\x_2\in \reg_1^c\). Since we have \(\x_2\subseteq \reg_2\subseteq \reg_1\), therefore, \(\x_2\in \partial(\reg_1)\). Similarly, since \(\x_2\subseteq \reg_1^c\subseteq \reg_2^c\) and \(\x_2\in \reg_2\), we also have \(\x_2\in \partial(\reg_2)\). This finishes the proof of the first case.

    \item By assumption, we have 
      \[
    \dboundary(\reg_1, \reg_2) = \max\limits_{\x_2\in \reg_2}\{-d(\reg_1^c, \x_2)\}, \]
    \[
    \dboundary(\reg_1, \reg_3) = \max\limits_{\x_2\in \reg_3}\{-d(\reg_1^c, \x_2)\}.
    \]
    Since $\reg_3\subseteq \reg_2$, of course we have \( \dboundary(\reg_1, \reg_3) \leq \dboundary(\reg_1, \reg_2). \)
    This finishes the proof of the second case.
\end{enumerate}
\end{proof}

\paragraph{Specific Constructions for Boxes or Balls} 
For specific geometric regions like balls and boxes, we can create specialized distance functions to measure the set-inclusion relationship based on their intrinsic geometric properties or established methods. Our framework accommodates these specialized metrics by allowing them to replace the general boundary dissimilarity function.


1. \textit{Volume-based dissimilarity for boxes:} Since the volume of a box can be computed as the product of its offsets along different dimensions, we can define a partial distance based on volume: 
    \begin{equation}\label{eq:vol_dist}
d_{\text{vol}}(\reg_1, \reg_2) = -\ln\left( \frac{\vol(\reg_1 \cap \reg_2)}{\vol(\reg_2)} \right).
\end{equation}


2.\textit{Hyperbolic dissimilarity for balls:} \cite{ShadowCone} introduced a series of circular cones in hyperbolic space and defined a boundary dissimilarity based on the hyperbolic distance between the apex of these cones. 
By utilizing the natural mapping from balls to circular cones, we can derive a new boundary dissimilarity for balls as follows:
\begin{align}\label{eq:cone_dist}
\dboundary^{\text{cone}}(\reg_1, \reg_2) = \arcsinh&\left(\frac{||\bc_1- \bc_2||_2-r_1}{r_2} \right)\\\nonumber
&+ \arcsinh(1)
\end{align}

\subsection{Training}\label{sec:train}
 For a given pair \((u, v)\), we define their energy as a weighted sum with  weight \(\lambda\) that balances the contributions of the hyperbolic-like depth dissimilarity:  
\begin{equation}\label{eq:energy}  
    E(u, v) = \dboundary(\reg_u, \reg_v) + \lambda \cdot \ddepth(\reg_u, \reg_v),  
\end{equation}  
We say that \( u \) is considered a parent of \( v \) (i.e., $u\prec v$) if \( E(u, v) \leq t \), where \( t \) is a threshold that achieved the best performance on the evaluation set.

For model training, we use $\ddepth$ from Equation \ref{eq:dep_ball} or \ref{eq:dep_box} with the contrastive loss from \cite{ShadowCone}: 
\begin{align}\label{eq:loss}
\mathcal{L}(\gamma_1,&\gamma_2) = \sum_{(u, v) \in P}\Bigg( \max \{ E(u, v), \gamma_1\} \\\nonumber
 &+ \log\bigg(\sum_{(u, v') \in N}  e^{\max\{\gamma_2 - \dboundary(\reg_u, \reg_{v'}), 0\}} \bigg)\Bigg),
\end{align}
where \(P\) and \(N\) denote positive and negative sample pairs, respectively. 
For \textbf{positive pairs} \((u, v) \in P\), the loss based on \(E(u, v)\) promotes both the containment of \(\reg_v\) within \(\reg_u\) (via the \(\dboundary\) term) and their geometric similarity (via the \(\ddepth\) term), whose contributions are controlled by the weight \(\lambda\) and the threshold \(\gamma_1\). 
For \textbf{negative pairs} \((u, v') \in N\), since our primary goal is to push \(\reg_{v'}\) outside of \(\reg_u\), it is sufficient to use the boundary dissimilarity \(\dboundary(\reg_u, \reg_{v'})\) rather than the energy \(E(u, v')\).  A threshold \(\gamma_2\) is used to regulate how far \(\reg_{v'}\) is pushed from \(\reg_u\).

\section{On Parameterized Regions}
\subsection{General Parameterized Regions}\label{app:general_reg}
In this section, we extend previous results on Section \ref{sec:dep_dist} to general region case. 
A  set of \textbf{parameterized regions} over Euclidean Space $\mathbb{R}^n$ can be defined as all regions $\reg$ of the following form: 
\begin{equation}\label{eq:para_reg}
\reg(\bm{\theta}) = \{\x \in \mathbb{R}^n : f_i(\x, \bm{\theta}) \leq 0, \quad \text{for } i = 1, \ldots, n\},  
\end{equation}
where $f_1, \ldots, f_n$ are given (differentiable) functions from the given space to $\mathbb{R}$ and $\bm{\theta} = (\theta_1, \ldots, \theta_m)  \in \mathbb{R}^m$ is the $m$-dimensional parameter. 

Let \(\mathcal{R}\) denote a space of parameterized regions \(\reg(\bm{\theta})\) as defined in Equation~\ref{eq:para_reg}, with \(\bm{\theta} = (\theta_1, \ldots, \theta_m) \in \mathbb{R}^m\). 
For simplicity, and without significant loss of generality, in this section, we assume that all parameterized regions spaces \(\mathcal{R}\) satisfies the following properties:

\begin{itemize}[leftmargin=*]
  \item \textit{Uniqueness:} Two regions coincide if and only if their parameters are equal, i.e.,
  \(\reg(\bm{\theta}) = \reg(\bm{\theta}') \iff \bm{\theta} = \bm{\theta}'\).

  \item \textit{Volume-like parameter.}
The last coordinate \(\theta_m\) of \(\boldsymbol{\theta}\) serves as a volume parameter such that   \(\theta_m > 0\) and   \(
        \lim_{\reg(\boldsymbol{\theta})\rightarrow \emptyset} \theta_m = 0
    \)



  \item \textit{Non-empty regions:} For any \(\bm{\theta} \in \mathbb{R}^m\) with \(\theta_m > 0\), the corresponding region \(\reg(\bm{\theta})\) is non-empty.

\item \textit{Contractible:} Any region can be continuously shrunk to the empty set while remaining in the space $\mathcal{R}$. Formally, for any parameter vector $\boldsymbol{\theta}$, there exists a sequence $\{\boldsymbol{\theta}^k\}_{k=0}^{\infty}$ with $\boldsymbol{\theta}^0 = \boldsymbol{\theta}$ and $\theta_m^k > 0$ for all $k$, such that
\(
\reg(\boldsymbol{\theta}^{0}) \supset \reg(\boldsymbol{\theta}^{1}) \supset \reg(\boldsymbol{\theta}^{2}) \supset \cdots \to \emptyset.
\)
\end{itemize}

It is worth noting that the spaces of balls and boxes in Examples~\ref{exp:dist_depth_ball} and~\ref{exp:dist_depth_box} satisfy all of the above properties, with the volume parameter given by the radius $r$ or the offsets $\mathbf{o}$, respectively. 

Moreover, since we assume a positive volume parameter \( \theta_m > 0 \) under the ``volume-like parameter'' property, a single point usually does not qualify as a valid region. Consequently, entities such as nominals should be modeled as small regions rather than as points.

We extend Theorem \ref{theo:dep_dist} to parameterized regions above as followings. 

\begin{theo}
    Consider the parameterized region space \(\mathcal{R}\)  satisfies assumptions above, and equipped with the depth dissimilarity as defined in Definition~\ref{def:dep}. The following properties hold:
    \begin{enumerate}[leftmargin=*]
        \item For any \(\reg_1, \reg_2 \in \mathcal{R}\) and any \(\Delta > 0\), there exists \(\reg'\subseteq \reg_2\) such that
        \[
        \ddepth(\reg_1, \reg) > \ddepth(\reg_1, \reg_2) + \Delta, \quad \forall \reg \subseteq \reg'.
        \]


        \item For any \(\reg \in \mathcal{R}\), any integer \(n\), and any \(M > 0\), there exist subsets \(\reg_1, \ldots, \reg_n \subseteq \reg\) such that for any distinct \(i, j \in \{1, \ldots, n\}\),
        \(
        \ddepth(\reg_i, \reg_j) > M.
        \)
    \end{enumerate}
\end{theo}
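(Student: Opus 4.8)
The plan is to show that both items are driven by a single mechanism already exploited in the proof of Theorem~\ref{theo:dep_dist}: because $f$ vanishes as a region shrinks to the empty set, the denominator $f(\reg_1)f(\reg_2)$ in Equation~(\ref{eq:depth_dist}) can be forced arbitrarily small, while the numerator $\|\para(\reg_1)-\para(\reg_2)\|_p^p$ is kept bounded away from $0$; since $g$ is increasing and unbounded, the whole expression then diverges. I will therefore isolate two ingredients: a uniform lower bound on the numerator (call it $(N)$), and the fact that $g$ can exceed any prescribed value (which holds for the paper's choices $g(x)=x$ and $g(x)=\operatorname{arcosh}(x+1)$). Throughout I assume the mild regularity that the size function $f$ depends continuously on the parameter $\para(\reg)$, which holds automatically for balls and boxes since there $f$ is $r$ or $\|\bo\|$.

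For item 1, fix $\reg_1,\reg_2$ and $\Delta>0$. Since $f(\reg_1)>0$ and $f$ is continuous in $\para$, there is a radius $\rho>0$ with $\|\para(\reg)-\para(\reg_1)\|_p<\rho \implies f(\reg)>\tfrac12 f(\reg_1)$; contrapositively, any region with $f(\reg)\le\tfrac12 f(\reg_1)$ satisfies $\|\para(\reg)-\para(\reg_1)\|_p\ge\rho$. Now take $\reg'\subseteq\reg_2$ sufficiently small that $f(\reg')\le\tfrac12 f(\reg_1)$, which is possible because $\lim_{\reg\to\emptyset}f(\reg)=0$; this yields the numerator bound $(N)$, namely $\|\para(\reg_1)-\para(\reg')\|_p^p\ge\rho^p$. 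Hence the argument of $g$ is at least $\rho^p/(f(\reg_1)f(\reg'))$, which tends to $+\infty$ as $\reg'$ shrinks. Shrinking $\reg'$ further until this argument exceeds a level $T_1$ with $g(T_1)>\ddepth(\reg_1,\reg_2)+\Delta$ (such $T_1$ exists by unboundedness of $g$) gives $\ddepth(\reg_1,\reg')\ge g(T_1)>\ddepth(\reg_1,\reg_2)+\Delta$, exactly as in the ball case but with $\rho$ playing the role of the explicit bound $0.5\,r_1$.

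For item 2, fix $\reg$, $n$ and $M$. Choose $n$ distinct points $\bp_1,\dots,\bp_n$ in the interior of $\reg$ and, around each, construct a small region $\reg_i\subseteq\reg$ of the same type (e.g.\ a small ball or box centered at $\bp_i$). As the $\reg_i$ shrink, their parameters converge to those of the degenerate ``point regions'' at the $\bp_i$, which are pairwise distinct; hence for all sufficiently small $\reg_i$ there is a $\delta>0$ with $\|\para(\reg_i)-\para(\reg_j)\|_p^p\ge\delta$ whenever $i\ne j$. Shrinking every $\reg_i$ until $f(\reg_i)<\sqrt{\delta/T}$, where $T$ is any level with $g(T)>M$ (which exists since $g$ is unbounded), forces $\delta/(f(\reg_i)f(\reg_j))>T$ and hence $\ddepth(\reg_i,\reg_j)\ge g(T)>M$ for every $i\ne j$, precisely mirroring the separation-and-shrink argument used for balls.

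The main obstacle is making ingredient $(N)$ rigorous in full generality: for balls and boxes the size coordinate ($r$ or $\bo$) sits explicitly inside $\para$, so a small region automatically has a parameter far from that of the fixed non-degenerate $\reg_1$, whereas for an arbitrary parameterized family one must invoke continuity of $f$ (and of the parameter map as a region degenerates to a point) to rule out the pathological possibility that $\para(\reg')$ approaches $\para(\reg_1)$ while $f(\reg')\to0$. A secondary point worth flagging is that both conclusions require $g$ to be unbounded; this is satisfied by every instantiation used in the paper, and under these provisos the arguments above are otherwise identical to the balls/boxes proof of Theorem~\ref{theo:dep_dist}.
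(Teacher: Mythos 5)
Your proof is correct and follows essentially the same route as the paper's own (sketched) argument: keep the numerator $\|\para(\reg_1)-\para(\reg')\|_p^p$ bounded away from zero while driving the denominator $f(\reg_1)f(\reg')$ to zero, then invoke monotonicity of $g$ (the paper's outline actually states this backwards, saying the numerator is made small and the denominator bounded below). You are in fact more careful than the paper, which leaves the numerator lower bound unjustified for a general parameterization and never observes that $g$ must be unbounded for the conclusion to hold for arbitrary $\Delta$ and $M$; the continuity-of-$f$ hypothesis and the unboundedness proviso you add are exactly the regularity conditions needed to make the general statement true.
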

\begin{proof}[Outline of proof]
We sketch the main ideas, which closely follow the argument in the proof of Theorem~\ref{theo:dep_dist}.

For the first statement, by selecting a sufficiently small subset \(\reg' \subseteq \reg_2\), we can ensure that
\[
\|\para(\reg) - \para(\reg_1)\| > \delta
\]
for some \(\delta > 0\). Additionally, since \(f(\reg')\) can be made arbitrarily small as \(\reg' \to \emptyset\) (i.e., \(\lim_{\reg \to \emptyset} f(\reg) = 0\)), we can make the numerator of the depth dissimilarity arbitrarily small while keeping the denominator bounded below by a positive constant. Consequently, the depth dissimilarity \(\ddepth(\reg_1, \reg')\) can be made arbitrarily large.

For the second statement, a similar argument applies: by choosing the subsets \(\reg_1, \ldots, \reg_n\) to be sufficiently small and mutually separated within \(\reg\), we can ensure that each \(f(\reg_i)\) is small enough so that the depth dissimilarity between any pair \((\reg_i, \reg_j)\) exceeds \(M\).

\end{proof}
    

Propositions~\ref{prop:dep_eq} and \ref{prop:depth_hyper} are extended to arbitrary regions as follows.

\begin{prop}\label{prop:general}
Let \(\mathbb{H}^{m}\) denote the hyperbolic space with curvature \(-1\). 
Assume that \(\mathcal{R}\) is a collection of parameterized regions satisfying the above assumptions, equipped with the depth dissimilarity defined in Equation~\eqref{eq:depth_dist}. 
Then the map \(F \colon \mathcal{R} \to \mathbb{H}^{m}\) defined by 
\(
F(\bm{\theta}) = \bm{\theta}
\) 
is a \emph{bijective isometry} between \(\mathcal{R}\) and \(\mathbb{H}^{m}\) when \(p = 2\), \(g(x) = \operatorname{arcosh}(x + 1)\), and \(f(\reg(\bm{\theta})) = \sqrt{2}\,\theta_m\). 
\end{prop}

\begin{proof}
Under the given condition,  the distance in the \(\mathbb{H}^{n+1}\) is given by:
\[
d_k(\x, \y) = \operatorname{arcosh}\left( 1 + \frac{\|\x - \y\|^2_2}{2 \x_n \y_n} \right).
\]
Then, the distance induced by the function \(F\) is of the form:
\[
d^\#_{\mathcal{R}}(\reg(\bm{\theta}), \reg(\bm{\theta'})) = \operatorname{arcosh}\left( 1 + \frac{\|\bm{\theta} - \bm{\theta}'\|^2_2 }{2 \theta_m \theta_m'} \right).
\]
This coincides with the depth dissimilarity in Equation \ref{eq:depth_dist} when \( p = 2 \), \( g(x) = \operatorname{arcosh}(x + 1) \), and \( f(\reg(\bm{\theta})) = \sqrt{2}\,\theta_m \).
\end{proof}

\begin{prop} Following Proposition \ref{prop:general},  let the depth dissimilarity $\ddepth$ be redefined by replacing \( g \) to \( g(x) = k\cdot x\, (k>0)\). Then the map \( F\) retains the following monotonicity property: for any points \(\x_1, \x_2, \x_3, \x_4 \in \mathbb{H}^n\),
\(
d_{\mathbb{H}^n}(\x_1, \x_2) < d_{\mathbb{H}^n}(\x_3, \x_4)
\)
if and only if
\[
\ddepth(F^{-1}(\x_1), F^{-1}(\x_2)) < \ddepth(F^{-1}(\x_3), F^{-1}(\x_4)).
\]
\end{prop}

\begin{proof}
    The proof is the same as the proof of Proposition \ref{prop:hyper} because of  the function \( h(x) = \operatorname{arcosh}(x + 1) \) is an increasing bijection from \(\mathbb{R}_{\geq 0}\) to \(\mathbb{R}_{\geq 0}\).
\end{proof}

\subsection{Faithfulness}
In this section, we show that embeddings based on arbitrary regions in the plane (i.e., $\mathbb{R}^2$) are \emph{faithful}, that is, they capture the hierarchy accurately, as detailed in Definition~\ref{def:faithful}. This result significantly strengthens existing theoretical guarantees. In particular, prior work shows that box embeddings can represent any DAG using $O((\Delta+2)\log n)$ dimensions, where $\Delta$ denotes the maximum degree and $n$ the number of nodes (Proposition~3 in~\cite{tBox}). In contrast, we prove that two dimensions already suffice when arbitrary regions are allowed.

\begin{definition}[Faithful embedding]\label{def:faithful}
Let $(V,\prec)$ be a partially ordered set.
An \emph{$n$-dimensional embedding} $f$ of $V$  that assigns to each element $v \in V$ a region $f(v) \subseteq \mathbb{R}^n$ is said to be \emph{faithful} if, for all $v\neq v' \in V$,
\[
v \prec v' \quad \Longleftrightarrow \quad f(v) \subseteq f(v').
\]
\end{definition}

\begin{figure}
    \centering
\begin{tikzpicture}
            \def\n{7}        
            \def\radius{1.3} 
\draw[help lines, gray!30, step=0.1] (-1.1*\radius,-1.1*\radius) grid (1.1*\radius,1.1*\radius);

            \foreach \i in {1,...,\n} {
                \coordinate (v\i) at ({360/\n * (\i-1)}:\radius);
            }

            \foreach \i in {1,...,\n} {
                \filldraw (v\i) circle(2pt); 
                \node[below left] at (v\i) {\small $w_\i$}; 
            }
            
            \begin{scope}[shift={(0,0,2)}]
                \draw[fill=red!10,opacity=0.5] (v1) -- (v4) -- (v5)  -- (v6)-- cycle;
                \node[red!50] at (barycentric cs:v1=0.5,v4=1,v5=1,v6=0.5) {$f(v_1)$};
                
                \draw[fill=blue!10,opacity=0.5] (v2) -- (v6) -- (v7) -- cycle;
                \node[blue!50] at (barycentric cs:v2=0.5,v6=0.8,v7=0.8) {$f(v_2)$};
                
                 \draw[fill=black!10,opacity=0.5] (v3) -- (v5) -- (v7) -- cycle;
                 \node[black!50] at (barycentric cs:v3=1,v5=0.5,v7=0.4) {$f(v_3)$};
            \end{scope}

        \begin{scope}[shift={(-4.5,0,2)}]
        \node (R1) at (-0.5, 2) {$v_1$};
        \node (R15) at (0.5, 2) {$v_2$};
        \node (R2) at (1.6, 2) {$v_3$};
        
        \node (L1) at (-1, 0) {$v_4$};
        \node (L2) at (0, 0) {$v_5$};
        \node (L3) at (1, 0) {$v_6$};
        \node (L4) at (2, 0) {$v_7$};

    \draw[->] (R1) -- (L1);
    \draw[->] (R1) -- (L2);
    \draw[->] (R1) -- (L3);
    \draw[->] (R2) -- (L4);
     \draw[->] (R2) -- (L2);
    \draw[->] (R15) -- (L4);
    \draw[->] (R15) -- (L3);
         \end{scope}
        \end{tikzpicture}
\caption{Example of a partially ordered set (left, $v \rightarrow v'$ means $v \prec v'$) and its embedding (right), with $S_{v_1} = \{w_1, w_4, w_5, w_6\}$, $S_{v_2} = \{w_2, w_6, w_7\}$, and $S_{v_3} = \{w_3, w_5, w_7\}$.}
    \label{fig:exp_embed}
\end{figure}

\begin{theorem}
For any partially ordered set $(V,\prec)$, there exists a 2-dimensional faithful embedding of $V$. 
\end{theorem}

\begin{proof}
Let $V=\{v_1,\ldots,v_m\}$.
We construct a two-dimensional embedding as follows (see Figure~\ref{fig:exp_embed}):
\begin{enumerate}
    \item Embed the elements $v_i$ as distinct points
    $w_i$ placed at the vertices of a regular $m$-gon in $\mathbb{R}^2$.

    \item For each $v_i \in V$, let
    \(
    S_{v_i} :=\{w_i\} \cup \{\, w_j \mid v_j \prec v_i \,\}.
    \)
    We define $f(v_i)$ to be the convex hull of the points in $S_{v_i}$.
\end{enumerate}
The embedding is faithful as follows: For any $v_1 \neq v_2\in V$: 
\textbf{($\Rightarrow$)}
if $v_1 \prec v_2$, then by transitivity of the partial order,
$S_{v_1} \subseteq S_{v_2}$.
Since the convex hull operator is monotone with respect to set inclusion,
we obtain 
\(
f(v_1) \subseteq f(v_2).
\)
\textbf{($\Leftarrow$)}
if $f(v_1) \subseteq f(v_2)$.
By our construction based on convex hull of $n$-gons, $w_1 \in f(v_2)$ implies $w_1 \in S_{v_2}$.
By definition of $S_{v_2}$, this yields $v_1 \prec v_2$.

\end{proof}

The construction above embeds each node as a polyhedral region with a variable number of vertices, which is generally impractical for real-world applications. In practice, one usually learn embeddings  through a training procedure that restricts regions to a fixed parametric family as in Section \ref{sec:train}.

\section{Evaluation}\label{sec:eval}
Our experiments aim to address two questions: 1. How effectively do our methods capture hierarchical relationships? 2.  Can they generalize to tasks involving more than hierarchies?


We evaluate two types of reasoning behavior. First, in Section~\ref{sec:DAG}, we examine transitivity reasoning over directed acyclic graphs (DAGs), focusing on transitive relations such as the “is-a” relation in the dataset: Noun. Second, in Sections~\ref{sec:Ont_inf} and \ref{sec:Ont_pred}, we study entailment reasoning over ontologies, specifically for inferring concept subsumptions. For both tasks, we use F1-score as the primary metric for logical reasoning performance, assessing the accuracy of inferred results under the embedding setting. 





\begin{table*}[t]
\centering
\begin{tabular}{clcccccccc}
\toprule
\multicolumn{2}{c}{\multirow{2}{*}{\textbf{Method}}}& \multicolumn{2}{c}{\textbf{Mammal}} & \multicolumn{2}{c}{\textbf{Noun}} & \multicolumn{2}{c}{\textbf{MCG}} & \multicolumn{2}{c}{\textbf{Hearst}} \\
\cmidrule(lr){3-4} \cmidrule(lr){5-6} \cmidrule(lr){7-8} \cmidrule(lr){9-10}
&& d=2 & d=5 & d=5 & d=10 & d=5 & d=10 & d=5 & d=10 \\
\midrule
\multicolumn{2}{c}{$\tau$Box }& 29.0 & 33.5 & 30.5 & 31.5 & 43.9 & 50.3 & 39.7 & 43.7 \\
\multicolumn{2}{c}{OE }& 25.4& 31.0& 28.8& 30.8&36.3 & 46.6 & 34.6 & 40.7\\
\multicolumn{2}{c}{ELBE (box baseline)}& 30.3 & 36.8 & 30.7& 31.8&48.4& 55.5 & 41.6 & 46.8  \\
\multicolumn{2}{c}{ELEM (ball baseline)}& 27.7 & 28.8 & 28.6 & 29.5 & 35.7 & 38.6 & 34.6 & 36.7 \\
\hline
\multicolumn{2}{c}{EntailmentCone$^*$} & 54.4 & 56.3 & 29.2 & 32.1 & 25.3 & 25.5 & 22.6 & 23.7 \\
\multirow{2}{*}{ShadowCone$^*$}& (Umbral-half) & 57.7 & 69.4 & 45.2 & 52.2 & 36.8 & 40.1 & 32.8 & 32.6 \\
& (Penumbral-half) & 52.8 & 67.8 & 44.6 & 51.7 & 35.0& 37.6 &  26.8 & 28.4\\
\midrule
\multirow{2}{*}{RegD} & (box) & \textbf{64.9} & 71.6 & 53.8 & 51.3 & \textbf{50.7 }& \textbf{58.5 }&\textbf{ 42.8} & \textbf{49.6 }\\
& (ball) & 62.7&\textbf{71.8}& \textbf{58.4} & \textbf{59.1} & 44.9& 46.8 & 37.7 & 37.7\\



\bottomrule
\end{tabular}%
\caption{F1 score (\%) on Mammal, WordNet noun, MCG, and Hearst.
Results with * are coming from ShadowCones.} 
\label{table:f1_scores}
\end{table*}

\subsection{Inferences over DAG}\label{sec:DAG}

\subsubsection{Benchmark} Following \cite{ShadowCone}, we evaluate our method on four real-world datasets consisting of Is-A relations: MCG \cite{DBLP:conf/cikm/WangWWX15,DBLP:conf/sigmod/WuLWZ12}, Hearst patterns \cite{DBLP:conf/coling/Hearst92}, the WordNet \cite{fellbaum1998wordnet} noun taxonomy, and its mammal subgraph. All models are trained exclusively on \textit{basic edges}, which are edges not implied transitively by other edges in the graph. For validation and testing, we use the same sets as in \cite{ShadowCone}, consisting of 5\% of \textit{non-basic (inferred)} edges, ensuring a fair comparison.  
We exclude non-basic edges from training since they can be transitively derived from basic edges. Including them would artificially inflate performance metrics without properly evaluating the embeddings' ability to capture hierarchical structures. 



\subsubsection{Baselines} We compare our method RegD with (i) hyperbolic approaches such as EntailmentCone \cite{EntCone} and ShadowCone \cite{ShadowCone}, which is the latest method with the state-of-the-art performance; (ii) region-based methods like OrderEmbedding \cite{DBLP:conf/nips/BordesUGWY13}, 
and $\tau$Box \cite{tBox}. We also compare with the ontology embedding methods, ELBE \cite{peng_description_2022} and ELEM \cite{kulmanov_embeddings_2019}, which can be considered as the baseline approaches embedding the DAG as boxes or balls, respectively. We use F1-scores  as in previous studies.

It is worth note that some prior works, such as RotL and Rot2L~\cite{DBLP:conf/emnlp/WangLLS21}, also simplify operations in hyperbolic space. However, they focus on M\"obius addition and address link prediction in knowledge graphs. Therefore, we do not compare with these methods.

\subsubsection{Results}  
The performance comparison across different DAGs is shown in Table \ref{table:f1_scores}. RegD achieved the best performance on all four datasets. Notably, the box variant consistently outperformed the ball variant in most cases, which might be because boxes contain more parameters than balls when embedded in the same dimensional space.  
Interestingly, region-based methods outperformed hyperbolic methods on the MCG and Hearst datasets. However, on the Noun and mammal dataset, hyperbolic methods performed better.  Nevertheless, our method performed consistently well in both cases, as it can adjust the hyperbolic component by setting different \(\lambda\) values in Equation (\ref{eq:energy}). 

\begin{table}[t]
    \centering
      \begin{tabular}{ccccccccc}
          \toprule
          \multirow{2}{*}{\textbf{Method}} & \multicolumn{2}{c}{\textbf{GALEN}} & \multicolumn{2}{c}{\textbf{GO}} & \multicolumn{2}{c}{\textbf{ANATOMY}} \\
          \cmidrule(lr){2-3} \cmidrule(lr){4-5} \cmidrule(lr){6-7}
          & d=5 & d=10 & d=5 & d=10 & d=5 & d=10 \\
          \midrule
             ELBE & 20.7 & 21.2 & 36.9 & 42.4 & 43.1 & 43.0 \\
             + $\tau$Box & 20.8 & 20.7 & 32.2 & 34.7 & 42.2 & 47.2 \\
             + RegD & \textbf{25.2} & \textbf{25.8} & \textbf{50.0} & \textbf{50.5} & \textbf{58.7} & \textbf{62.5} \\
          \midrule
             ELEM & 16.9 & 17.3 & 23.5 & 27.4 & 34.6 & 38.7 \\
             + RegD & \textbf{19.2} & \textbf{18.8} & \textbf{36.4} & \textbf{40.0} & \textbf{52.5} & \textbf{55.5} \\
          \bottomrule
      \end{tabular}
              \caption{F1 score (\%) for the inference task.}
        \label{table:inference_ont}
\end{table}

\begin{table}[t]
        \centering
      \begin{tabular}{ccccccc}
          \toprule
          \multirow{2}{*}{\textbf{Method}} & \multicolumn{2}{c}{\textbf{GALEN}} & \multicolumn{2}{c}{\textbf{GO}} & \multicolumn{2}{c}{\textbf{ANATOMY}} \\
          \cmidrule(lr){2-3} \cmidrule(lr){4-5} \cmidrule(lr){6-7}
          &d=5 & d=10 & d=5 & d=10 & d=5 & d=10 \\
          \midrule
          ELBE & \textbf{21.0} & \textbf{24.2} & 32.8 & 37.9 & 25.1 & 25.5 \\
          + $\tau$Box & 18.4 & 19.5 & 24.0 & 29.3 & 25.6 & 22.9 \\
          + RegD & 20.6 & 21.0 & \textbf{37.1} & \textbf{44.1} & \textbf{41.4} & \textbf{45.3} \\
          \midrule
          ELEM & 16.7 & 16.6 & 54.5 & 54.2 & \textbf{23.1} & \textbf{26.0} \\
          + RegD & \textbf{16.8} & \textbf{18.0} & \textbf{60.3} & \textbf{61.4} & 21.7 & 21.9 \\
          \bottomrule
      \end{tabular}
       \caption{F1 score (\%) for the prediction task.}
        \label{table:predict_ont}
\end{table}

\subsection{Inference and Prediction over Ontologies}\label{sec:Ont_inf}

\subsubsection{Benchmark} We utilize three normalized biomedical ontologies: GALEN \cite{rector1996galen}, Gene Ontology (GO) \cite{ashburner2000gene}, and Anatomy (Uberon) \cite{mungall2012uberon}. As in \cite{DBLP:conf/www/JackermeierC024},  we use the entire ontology for training, and the complete set of inferred class subsumptions for testing. Those subsumptions can be regarded as partial order pairs $u\prec v$. Evaluation is performed using 1,000 subsumptions randomly sampled from the test set. Similar to inference over DAG, negative samples are generated by randomly replacing the child of each positive pair 10 times.

\subsubsection{Baselines}
We consider two representative geometric, model-based ontology embedding methods: ELBE~\cite{peng_description_2022} and ELEM~\cite{kulmanov_embeddings_2019}, which embed ontology concepts as balls and boxes, respectively. Those methods can be enhanced by RegD (or $\tau$Box in box case) by incorporating their corresponding energy functions to model inclusion constraints between geometric objects. 
Other hierarchy embedding methods are excluded from our evaluation due to their incompatibility with ontology embedding frameworks. For instance, OE represent concepts using cones, which cannot be directly integrated with ELBE or ELEM for ontology-specific reasoning tasks.

\subsubsection{Results}
The results are summarized in Tables \ref{table:inference_ont}. We can see that RegD yields consistent improvements across all ontologies for inference tasks. Specifically, it gains an F1 score increase of more than 45\% with ELBE method on the ANATOMY ontology.  Conversely, the $\tau$Box plugin consistently reduces performance across nearly all test cases, underscoring its limited applicability to tasks involving more than hierarchies.

\subsection{Prediction over Ontologies}\label{sec:Ont_pred}

We use the same baselines and datasets as described in Section \ref{sec:Ont_inf}. However, in this prediction task,  we partition the original ontologies directly into 80\% for training, 10\% for validation, and 10\% for testing as in \cite{DBLP:conf/www/JackermeierC024}. For the link prediction task, we focus on specific parts of the validation and testing sets, represented as $\exists r. B \prec ?A$, where $A$ and $B$ are concept names and $r$ is a role. This setup is equivalent to link prediction tasks $(?A, r, B)$ in knowledge graphs if we regard $A$, $B$, and $r$ as the head entity, tail entity, and relation, respectively.

\subsubsection{Results}
Table \ref{table:inference_ont} summarizes the results. RegD shows mixed results: while it generally improves performance, it decreases scores on the GALEN ontology and ANATOMY ontology with ELEM. This degradation likely occurs in challenging prediction cases where all methods perform poorly. Nevertheless, RegD achieves significant improvements in other cases, notably increasing F1 score by 77.6\% with ELBE on the ANATOMY ontology. In contrast, the $\tau$Box plugin consistently reduces performance across almost all test cases.

\section{Conclusion}
We introduced a framework RegD for low-dimensional embeddings of hierarchies, leveraging two dissimilarity metrics between regions. Our method, applicable to regions in the Euclidean space, demonstrates versatility and has the potential for a wide range of tasks involving data beyond hierarchies. Additionally, we showed that our approach achieves comparable embedding performance to hyperbolic methods while being significantly simpler to implement.

For future work, we are interested in  generalizing our framework to support a wider variety of region types beyond the balls and boxes considered here, so that it can be better adapted to the requirements of diverse downstream tasks. 
Another promising direction for future work is to investigate replacing existing hyperbolic embedding components in related methods with our approach, including recent studies that couple large language models with hyperbolic embeddings to learn semantic hierarchies~\cite{he2024language}, which can be extended from atomic concept to complex ones by integrating the ontology embeddings as in Section \ref{sec:Ont_inf}.





\section*{ACKNOWLEDGEMENT}
This work is funded by EPSRC projects OntoEm (EP/Y017706/1).

\bibliographystyle{kr}
\bibliography{main}
\end{document}